\newtheorem{theorem}{Theorem}
\newtheorem{lemma}{Lemma}
\begin{document}
\title{Robust and Efficient Fuzzy C-Means Clustering Constrained on Flexible Sparsity}
\author{Jinglin Xu, Junwei Han$^\star$~\IEEEmembership{Senior Member,~IEEE,} Mingliang Xu,
        Feiping Nie and Xuelong Li~\IEEEmembership{Fellow,~IEEE,}
\IEEEcompsocitemizethanks{\IEEEcompsocthanksitem Jinglin Xu and Junwei Han are with the School of Automation, Northwestern Polytechnical University, Xi'an 710072, Shaanxi, China. E-mail: \{xujinglinlove, junweihan2010\}@gmail.com
\IEEEcompsocthanksitem Mingliang Xu is with the School of Information Engineering of Zhengzhou University, China. E-mail: iexumingliang@zzu.edu.cn
\IEEEcompsocthanksitem Feiping Nie and Xuelong Li are with School of Computer Science and Center for OPTical IMagery Analysis and Learning (OPTIMAL), Northwestern Polytechnical University, Xi'an 710072, Shaanxi, China. E-mail: feipingnie@gmail.com, xuelong$\_$li@nwpu.edu.cn
\IEEEcompsocthanksitem$^\star$Corresponding author}}

\maketitle

\begin{abstract}
Clustering is an effective technique in data mining to group a set of objects in terms of some attributes. Among various clustering approaches, the family of K-Means algorithms gains popularity due to simplicity and efficiency. However, most of existing K-Means based clustering algorithms cannot deal with outliers well and are difficult to efficiently solve the problem embedded the $L_0$-norm constraint. To address the above issues and improve the performance of clustering significantly, we propose a novel clustering algorithm, named REFCMFS, which develops a $L_{2,1}$-norm robust loss as the data-driven item and imposes a $L_0$-norm constraint on the membership matrix to make the model more robust and sparse flexibly. In particular, REFCMFS designs a new way to simplify and solve the $L_0$-norm constraint without any approximate transformation by absorbing $\|\cdot\|_0$ into the objective function through a ranking function. These improvements not only make REFCMFS efficiently obtain more promising performance but also provide a new tractable and skillful optimization method to solve the problem embedded the $L_0$-norm constraint. Theoretical analyses and extensive experiments on several public datasets demonstrate the effectiveness and rationality of our proposed REFCMFS method.
\end{abstract}

\begin{IEEEkeywords}
K-Means Clustering, Fuzzy C-Means Clustering, $L_{2,1}$-norm Loss, $L_0$-norm Constraint, Flexible Sparsity.
\end{IEEEkeywords}

\IEEEpeerreviewmaketitle

\section{Introduction}

As a fundamental problem in machine learning, clustering is widely used for many fields, such as the network data (including Protein-Protein Interaction Networks \cite{bhowmick2016clustering}, Road Networks \cite{han2017systematic}, Geo-Social Network \cite{wu2018density}), medical diagnosis \cite{nithya2013survey}, biological data analysis \cite{wiwie2015comparing}, environmental chemistry \cite{d2015time} and so on. K-Means clustering is one of the most popular techniques because of its simplicity and effectiveness, which randomly initializes the cluster centroids, assigns each sample to its nearest cluster and then updates cluster centroid iteratively to cluster a dataset into some subsets.

Over the past years, many modified versions of K-Means algorithms have been proposed, such as Global K-Means \cite{LIKAS2003451} and its variants \cite{BAGIROV20083192,LAI20101954,BAGIROV2011866}, MinMax K-Means clustering \cite{Tzortzis2014The}, K-Means based Consensus clustering \cite{wu2015k}, Optimized Cartesian K-Means \cite{wang2015optimized}, Group K-Means \cite{wang2015group}, Robust K-Means \cite{NIPS2016_6126}, I-K-Means-+ \cite{Ismkhan2018I} and so on. Most importantly, researchers have pointed out that the objective function of K-Means clustering can be expressed as the Frobenius norm of the difference between the data matrix and the low rank approximation of that data matrix \cite{arora2013similarity,bauckhage2015k}.
Specifically, the problem of hard K-Means clustering is as follows:
\begin{equation}\label{kmeans}
\underset{\bm{B},\bm{\alpha}}{\min}\sum_{i=1}^n\sum_{k=1}^c\alpha_{ik}\|\bm{x}_i-\bm{b}_k\|^2
=\underset{\bm{B},\bm{\alpha}}{\min}\|\bm{X}-\bm{B}\bm{\alpha}^\top\|^2
\end{equation}
where $\bm{X}\in\mathbb{R}^{d\times n}$ is a matrix of data vectors $\bm{x}_i\in\mathbb{R}^d$; $\bm{B}\in\mathbb{R}^{d\times c}$ is a matrix of cluster centroids $\bm{b}_k\in\mathbb{R}^d$; $\bm{\alpha}\in\mathbb{R}^{n\times c}$ is a cluster indicator matrix of binary variables such that $\alpha_{ik}=1$ if $\bm{x}_i\in\!C_k$ where $C_k$ denotes the $k$-th cluster and otherwise $\alpha_{ik}=0$.

Although the K-Means clustering algorithm has been used widely, it is sensitive to the outliers, which easily deteriorate the clustering performance. Therefore, two main approaches are proposed to deal with the outliers in K-Means clustering: one based on outlier analysis (outlier detection or removal), and the other one based on outlier suppression (robust model).

For the first one, much work has been done on outlier analysis. Several algorithms\cite{jiang2001two,he2003discovering,hautamaki2005improving,jiang2008clustering,zhou2009novel,pamula2011outlier,jiang2016initialization} perform clustering and detect outliers separately in different stages, where the dataset is divided into different clusters that can be used to identify outliers by measuring the data points and clusters. Besides, \cite{rehm2007novel} provides the definition of outliers according to the noise distance or the remote distances between the data points and all other clustering centers. \cite{zhang2009new} introduces a local distance-based outlier factor to measure the outlierness of objects in scattered datasets. \cite{ott2014integrated} proposes a sub-gradient-based algorithm to jointly solve the problems of clustering and outliers detection. \cite{whang2015non} proposes a non-exhaustive overlapping K-Means algorithm to identify outliers during the clustering process. \cite{gan2017k} performs the clustering and outlier detection simultaneously by introducing an additional \lq\lq cluster\rq\rq\ into the K-Means algorithm to hold all outliers. For the second one, the main strategy of outlier suppression is to modify the objective functions as a robust model, such as \cite{Lam2004Robust}.

Fuzzy C-Means (FCM) introduces the concept of fuzzy sets that has been successfully used in many areas, which makes the clustering method more powerful. However, FCM still exists several drawbacks, including the sensitivity to initialization and outliers, and the limitation to convex clusters. Therefore, many extensions and variants of FCM clustering are advanced in recent years. Augmented FCM \cite{izakian2013clustering} revisits and augments the algorithm to make it applicable to spatiotemporal data. Suppressed FCM \cite{szilagyi2014generalization} increases the difference between high and low membership grades and gives more accurate partitions of the data with less iterations compared to the FCM. Sparse FCM \cite{qiu2015sparse} reforms traditional FCM to deal with high dimensional data clustering, based on Witten's sparse clustering framework. Kernel based FCM \cite{ding2016kernel} optimizes FCM, based on the genetic algorithm optimization which is combined of the improved genetic algorithm and the kernel technique to optimize the initial clustering center firstly and to guide the categorization. Multivariate FCM \cite{pimentel2016multivariate} proposes two multivariate FCM algorithms with different weights aiming to represent how important each different variable is for each cluster and to improve the clustering quality. Robust-Learning FCM \cite{Yang2017Robust} is free of the fuzziness index and initializations without parameter selection, and can also automatically find the best number of clusters.

Since the above extensions still has weak performance when dealing with outliers, several robust FCM algorithms come out. Specifically, conditional spatial FCM \cite{adhikari2015conditional} improves the robustness of FCM through the incorporation of conditioning effects imposed by an auxiliary variable corresponding to each pixel.
Modified possibilistic FCM \cite{aparajeeta2016modified} jointly considers the typicality as well as the fuzzy membership measures to model the bias field and noise. Generalized entropy-based possibilistic FCM \cite{askari2017generalized} utilizes the functions of distance in the fuzzy, possibilistic, and entropy terms of the clustering objective function to decrease the noise effects on the cluster centers.
Fast and robust FCM \cite{lei2018significantly} proposes a significantly faster and more robust based on the morphological reconstruction and membership filtering.

Inspired by above analysis, we develop a novel clustering method in this work, named as Robust and Efficient Fuzzy C-Means Clustering constrained on Flexible Sparsity (REFCMFS), by introducing a flexible sparse constraint imposed on the membership matrix to improve the robustness of the proposed method and to provide a new idea to simplify solving the problem of sparse constraint on $L_0$-norm. The proposed method REFCMFS not only improves the robustness from the $L_{2,1}$-norm data-driven term but also obtains the solution with proper sparsity and greatly reduces the computational complexity.

Note that we have proposed a Robust and Sparse Fuzzy K-Means (RSFKM) clustering method \cite{xu2016robust} recently. However, our proposed REFCMFS method in this paper is quite different from RSFCM. Concretely, RSFCM takes into account the robustness of the data-driven term by utilizing $L_{2,1}$-norm and capped $L_1$-norm, and utilizes the Lagrangian Multiplier method and Newton method to solve the membership matrix whose sparseness is adjusted by the regularized parameter.
In contrast, our proposed REFCMFS method maintains the robustness of the clustering model by using the $L_{2,1}$-norm loss and introduces the sparse constraint $L_0$-norm imposed on the membership matrix with a flexible sparsity, i.e., $\|\bm{\alpha}\|_0=K$ where $K\in\mathbb{R}$ denotes the number of nonzero elements in $\bm{\alpha}$.
It is well known that solving the problem with $L_0$-norm constraint is difficult. The proposed REFCMFS method absorbs this constraint into the objective function by designing a novel ranking function $\psi$, which is an efficient way to calculate the optimal membership matrix and greatly reduces the computational complexity, especially for a large dataset. The related theoretical analyses and comparison experiments can be demonstrated in Sections \ref{theoretical_analyses} and \ref{experiments}.

The contributions of our proposed REFCMFS method in this paper can be summarized as follows:
\begin{enumerate}
\item REFCMFS develops the $L_{2,1}$-norm loss for the data-driven item and introduces the $L_0$-norm constraint on the membership matrix, which makes the model have the abilities of robustness, proper sparseness, and better interpretability. This not only avoids the incorrect or invalid clustering partitions from outliers but also greatly reduces the computational complexity.
\item REFCMFS designs a new way to simplify and solve the $L_0$-norm constraint directly without any approximation. For each instance, we absorb $\|\cdot\|_0$ into the objective function through a ranking function which sorts $c$ elements in ascending order and selects first $\tilde{\kappa}$ smallest elements as well as corresponding membership values and sets the rest of membership values as zeros. This makes REFCMFS can be solved by a tractable and skillful optimization method and guarantees the optimality and convergence.
\item Theoretical analyses, including the complexity analysis and convergence analysis, are presented briefly, and extensive experiments on several public datasets demonstrate the effectiveness and rationality of the proposed REFCMFS method.
\end{enumerate}
The rest of this paper is organized as follows. The related works are introduced in Section 2. In Section 3, we develop a novel REFCMFS method and provide a new idea to solve it masterly. Some theoretical analyses of REFCMFS, i.e., complexity analysis and convergence analysis, are shown in Section 4. Section 5 provides the experimental results on several public datasets, followed by convergence curves and parameter sensitivity analyses of experiments. The conclusion is shown in Section 6.

\section{Preliminary Knowledge}

In this section, we briefly review some typical literature on K-Means Clustering, Gaussian Mixed Model, Spectral Clustering, Fuzzy C-Means Clustering, and Robust and Sparse Fuzzy K-Means Clustering related to the proposed methods.

\subsection{K-Means Clustering}
The K-Means clustering has been shown in problem (\ref{kmeans}), where $\bm{\alpha}\in\mathbb{R}^{n\times c}$ is the membership matrix and each row of $\bm{\alpha}$ satisfies the $1$-of-$c$ coding scheme (if a data point $\bm{x}_i$ is assigned to the $k$-th cluster then $\alpha_{ik}=1$ and $\alpha_{ik}=0$ otherwise). Although K-Means Clustering is simple and can be solved efficiently, it is very sensitive to outliers.

\subsection{Fuzzy C-Means Clustering}
As one of the most popular fuzzy clustering techniques, Fuzzy C-Means Clustering \cite{bezdek1980convergence} is to minimize the following objective function:
\begin{equation}
\begin{split}
&\underset{\bm{\alpha},\bm{B}}{\min}\sum_{i=1}^n\sum_{k=1}^c\|\bm{x}_i-\bm{b}_k\|_2^2\alpha_{ik}^r\\
&s.t.\sum_{k=1}^c\alpha_{ik}=1,0<\sum_{i=1}^n\alpha_{ik}<n,\alpha_{ik}\geq0
\end{split}
\label{fcm}
\end{equation}
where $\bm{\alpha}\in\mathbb{R}^{n\times c}$ is the membership matrix and whose elements are nonnegative and their sum equals to one on each row. The parameter $r>1$ is a weighting exponent on each fuzzy membership and determines the amount of fuzziness of the resulting clustering.

The objective functions of K-Means and FCM are virtually identical, and the only difference is to introduce a vector (i.e., each row of $\bm{\alpha}$) which expresses the percentage of belonging of a given point to each of the clusters. This vector is submitted to a 'stiffness' exponent (i.e., $r$) aimed at providing more importance to the stronger connections (and conversely at minimizing the weight of weaker ones). When $r$ tends towards infinity, the resulting vector becomes a binary vector, hence making the objective function of FCM identical to that of K-Means. Besides, FCM tends to run slower than K-Means since each point is evaluated with each cluster, and more operations are involved in each evaluation. K-Means only needs to do a distance calculation, whereas FCM needs to do a full inverse-distance weighting.

\subsection{Gaussian Mixed Model}
Unlike K-Means clustering which generates hard partitions of data, Gaussian Mixed Model (GMM) \cite{bishop2006pattern} as one of the most widely used mixture models for clustering can generate soft partition and is more flexible. Considering that each cluster can be mathematically represented by a parametric distribution, the entire dataset is modeled by a mixture of these distributions. In GMM, the mixture of Gaussians has $c$ component densities $p_k|_{k=1}^c$ mixed together with $c$ mixing coefficients $\pi_k|_{k=1}^c$:
\begin{equation}
P(\bm{x}_i|\Theta)=\sum_{k=1}^c\pi_kp_k(\bm{x}_i|\theta_k)
\label{gmm}
\end{equation}
where $\Theta\!=\!(\pi_1,\!\cdots\!,\pi_c,\theta_1,\!\cdots\!,\theta_c)$ are parameters such that $\sum_{k=1}^c\pi_k\!=\!1$ and each $p_k$ is a Gaussian density function parameterized by $\theta_k$. GMM use mixture distributions to fit the data and the conditional probabilities of data points, $P(\bm{x}_i|\Theta)|_{i=1}^n$, are used to assign probabilistic labels.
Although the Expectation-Maximization (EM) algorithm for GMM can achieve the promising results, it has a high computational complexity.

\subsection{Spectral Clustering}
The general Spectral Clustering (SC) method \cite{chung1997spectral} needs to construct an adjacency matrix and calculate the eigen-decomposition of the corresponding Laplacian matrix. However, both of these two steps are computational expensive. Given a data matrix $\bm{X}\in\mathbb{R}^{d\times n}$, spectral clustering first constructs an undirected graph by its adjacency matrix $\bm{W}\!=\!\{w_{ij}\}|_{i,j=1}^n$, each element of which denotes the similarity between $\bm{x}_i$ and $\bm{x}_j$. The graph Laplacian $\bm{L}\!=\!\bm{D}\!-\!\bm{W}$ is calculated where $\bm{D}$ denotes the degree matrix which is a diagonal matrix whose entries are row sums of $\bm{W}$, i.e., $d_{ii}\!=\!\sum_{j=1}^nw_{ij}$. Then spectral clustering use the top $c$ eigenvectors of $\bm{L}$ corresponding to the $c$ smallest eigenvalues as the low dimensional representations of the original data. Finally, the traditional K-Means clustering is applied to obtained the clusters. Due to the high complexity of the graph construction and the eigen-decomposition, spectral clustering is not suitable to deal with the large-scale applications.

\subsection{Robust and Sparse Fuzzy K-Means Clustering}
Considering that the $L_2$-norm loss imposed in problems (\ref{kmeans}) and (\ref{fcm}) lacks of robustness, with the development of $L_{2,1}$-norm \cite{nie2010efficient,jiang2015robust} technologies, amount of robust loss functions are designed and shown their empirical successes in various applications. For example, the recent work \cite{xu2016robust} provided a robust and sparse Fuzzy K-Means clustering by introducing two robust loss functions (i.e., $L_{2,1}$-norm and capped $L_1$-norm) and a penalized regularization on membership matrix. Its objective functions can be written as:
\begin{equation}
\begin{split}
&\underset{\bm{\alpha},\bm{B}}{\min}\sum_{i=1}^n\sum_{k=1}^c\tilde{d}_{ik}\alpha_{ik}+\gamma\|\bm{\alpha}\|_F^2\\
&s.t. \bm{\alpha}\bm{1}_c=\bm{1}_n,\bm{\alpha}\geq\bm{0}
\end{split}
\label{rsfkm}
\end{equation}
where
\begin{equation}
\tilde{d}_{ik}=\|\bm{x}_i-\bm{b}_k\|_2,\ \text{or}\ \tilde{d}_{ik}=\min(\|\bm{x}_i-\bm{b}_k\|_2,\varepsilon)
\label{rsfkm-d_ik}
\end{equation}
where $\bm{\alpha}\in\mathbb{R}^{n\times c}$ is the membership matrix and $\gamma$ is the regularization parameter, and $\varepsilon$ is a threshold. When $\gamma$ is zero, the membership vector of each sample becomes extremely sparse (only one element is nonzero and others are zero). The membership matrix equals to the binary clustering indicator matrix, which is hard K-Means clustering. With the gradual increase of $\gamma$, membership vector contains a growing number of nonzero elements. When $\gamma$ becomes a large value, all elements in membership vectors are all nonzero, which is equivalent to FCM clustering.

\section{Robust and Efficient FCM Clustering Constrained on Flexible Sparsity}

In this section, we introduce our proposed REFCMFS method, which develops the $L_{2,1}$-norm loss for the data-driven item and imposes the $L_0$-norm constraint on the membership matrix to make the model more robust and sparse flexibly. We also design a new way to simplify and solve the $L_{2,1}$-norm loss with the $L_0$-norm constraint efficiently without any approximation.

\subsection{Formulation}

Based on the Fuzzy C-Means Clustering algorithm, in order to make the model more robust, proper sparse, and efficient during the clustering, we propose the following objective function:
\begin{equation}
\begin{split}
&\underset{\bm{\alpha},\bm{B}}{\min}\sum_{i=1}^n\sum_{k=1}^c\|\bm{x}_i-\bm{b}_k\|_2\alpha_{ik}^r \\
&s.t.\bm{\alpha}\geq\bm{0},\bm{\alpha}\bm{1}_c=\bm{1}_n,\|\bm{\alpha}\|_0=K
\end{split}
\label{our_objective_function}
\end{equation}
where $\bm{\alpha}\in\mathbb{R}^{n\times c}$ is the membership matrix constrained by the $L_0$-norm and $r>1$ is the hyper-parameter that controls how fuzzy the cluster will be (the higher, the fuzzier) and $K\in\mathbb{N}_+$ denotes the number of nonzero elements in $\bm{\alpha}$, which constrains the sparseness of membership matrix to be $K$.

We find that $\|\bm{\alpha}\|_0\!=\!K$ constrains the number of all the elements of $\bm{\alpha}$ not the number of elements of each $\bm{\alpha}^i$, where $\bm{\alpha}^i$ is the $i$-th row of membership matrix $\bm{\alpha}$ and responds to the membership vector of the $i$-th sample. This easily leads to two extreme cases for $\bm{\alpha}^i$, i.e., $\bm{\alpha}^{em1}\!=\![0,\cdots,0,1,0,\cdots,0]$ and $\bm{\alpha}^{em2}\!=\![\frac{1}{c},\cdots,\frac{1}{c}]$, where $\bm{\alpha}^{em1}$ makes the soft partition degrade into the hard partition and $\bm{\alpha}^{em2}$ results in an invalid partition for the $j$-th sample because all the membership values are equal. Therefore, we further divide the problem (\ref{our_objective_function}) into $n$ subproblems and impose the $L_0$-norm constraint on the membership vector for each sample. Therefore, REFCMFS can be presented as follows:
\begin{equation}
\begin{split}
&\underset{\bm{\alpha}^i,\bm{B}}{\min}\sum_{k=1}^c\|\bm{x}_i-\bm{b}_k\|_2\alpha_{ik}^r\\
&s.t.\bm{\alpha}^i\geq\bm{0},\bm{\alpha}^i\bm{1}_c=1,\|\bm{\alpha}^i\|_0=\tilde{K}
\end{split}
\label{subproblem_each_sample}
\end{equation}
where $\tilde{K}\!=\!\frac{K}{n}$ denotes the number of nonzero elements in $\bm{\alpha}^i$, $\tilde{K}\in\mathbb{N}_+$, and $1<\tilde{K}<c$.

It is obvious that $\|\bm{x}_i\!-\!\bm{b}_k\|_2$ achieves the robustness by using the $L_{2,1}$-norm on the similarity between $\bm{x}_i$ and $\bm{b}_k$, and $\|\bm{\alpha}^i\|_0\!=\!\tilde{K}$ makes the membership vector with the sparsity $\tilde{K}$, which not only avoids the incorrect or invalid clustering partitions from outliers but also greatly reduces the computational complexity.

\subsection{Optimization}
In this subsection, we provide an efficient iterative method to solve problem (\ref{subproblem_each_sample}). More specifically, we alternatively update one optimization variable while keeping other optimization variables fixed. It is represented as follows.

\paragraph{Step 1: Solving $\bm{\alpha}$ while fixing $\bm{B}$}
With the centroid matrix $\bm{B}$ fixed, the problem (\ref{subproblem_each_sample}) becomes:
\begin{equation}
\begin{split}
&\underset{\bm{\alpha}^i}{\min}\sum_{k=1}^c\|\bm{x}_i-\bm{b}_k\|_2\alpha_{ik}^r\\
&s.t.\bm{\alpha}^i\geq\bm{0},\bm{\alpha}^i\bm{1}_c=1,\|\bm{\alpha}^i\|_0=\tilde{K}
\end{split}
\label{alpha_i}
\end{equation}

Due to directly solve the problem (\ref{alpha_i}) difficultly, we need to do some transformations as follows:
\begin{equation}
\begin{split}
&\underset{\bm{\alpha}^i}{\min}\sum_{k=1}^ch_{ik}\alpha_{ik}^r\\
&s.t.\bm{\alpha}^i\geq\bm{0},\bm{\alpha}^i\bm{1}_c=1, \|\bm{\alpha}^i\|_0=\tilde{K}
\end{split}
\label{alpha_i_h}
\end{equation}
where $h_{ik}\!=\!\|\bm{x}_i\!-\!\bm{b}_k\|_2\in\mathbb{R}$ and $\bm{h}^i=[h_{i,1},\cdots,h_{i,c}]\in\mathbb{R}^c$ is a row-vector contained different $h_{ik}|_{k=1}^c$.
To efficiently minimize the problem (\ref{alpha_i_h}), we define a ranking function $\psi$ and perform it on $\bm{h}^i$, and then obtain:
\begin{equation}
\psi(\bm{h}^i)=\bm{h}^i\bm{P}=[h_{i,\psi(1)},\!\cdots\!,h_{i,\psi(\tilde{K})},\!\cdots\!,h_{i,\psi(c)}]
\label{permutation}\end{equation}
where $\psi$ sorts $c$ elements of $\bm{h}^i$ in ascending order and $\bm{P}$ is the corresponding permutation matrix which results in permuting columns of $\bm{h}^i$ along the order $\{\psi(1),\psi(2),\cdots,\psi(c)\}$. Based on equation (\ref{permutation}), we select first $\tilde{K}$ smallest elements as well as their corresponding membership values in $\bm{\alpha}^i$, meanwhile, setting the membership values of the rest $c\!-\!\tilde{K}$ elements as zeros, i.e., $\alpha_{i,\psi(k)}|_{k=\tilde{K}+1}^c\!=\!0$. Intuitively, we present the above operations in Figure \ref{explanation}.
\begin{figure}[t]
\begin{center}
\includegraphics[width=\linewidth]{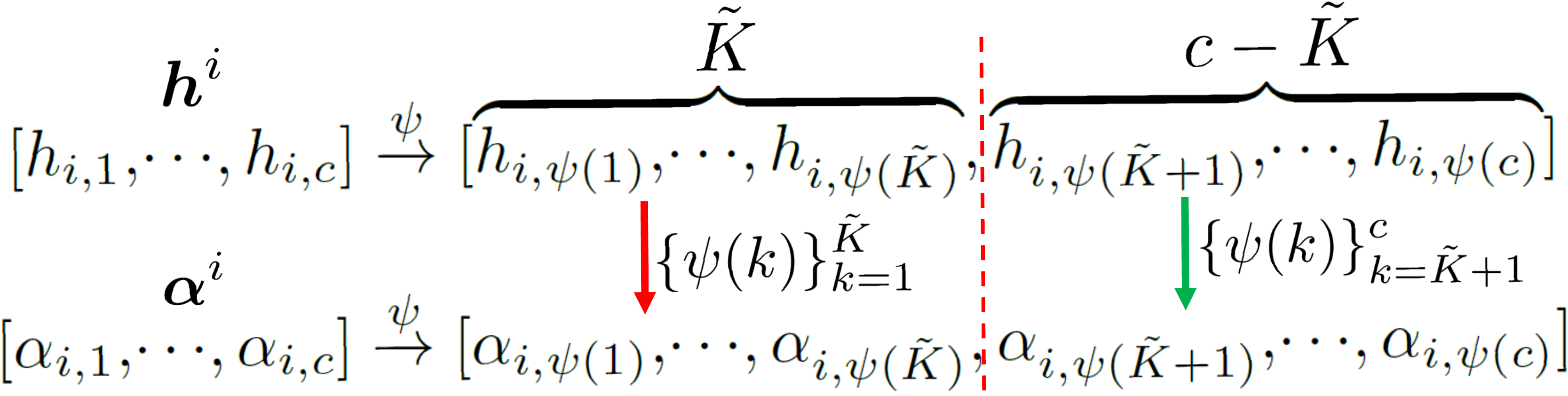}
\end{center}
\caption{Explaination of performing $\psi$ on $\bm{h}^i$ and $\bm{\alpha}^i$. For instance, supposed that $c\!=\!5$, $\tilde{K}\!=\!3$, and $\bm{h}^i\!=\![h_{i,1},h_{i,2},h_{i,3},h_{i,4},h_{i,5}]\!=\![2.4, 3.5, 0.6, 7.8, 1.9]$, then $\psi(\bm{h}^i)\!=\![h_{i,\psi(1)},h_{i,\psi(2)},h_{i,\psi(3)},h_{i,\psi(4)},h_{i,\psi(5)}]\!=\![0.6, 1.9, 2.4, 3.5, 7.8]$. According to the first $\tilde{K}$ elements of $\psi(\bm{h}^i)$, i.e., [0.6, 1.9, 2.4], optimizing their corresponding membership values and setting the rest membership values as zeros, there is $\bm{\alpha}^i=[\alpha_{i,\psi(1)},\alpha_{i,\psi(2)},\alpha_{i,\psi(3)},\alpha_{i,\psi(4)},\alpha_{i,\psi(5)}]=[0.39, 0.33, 0.18, 0, 0]$.}
\label{explanation}
\end{figure}

Therefore, the problem (\ref{alpha_i_h}) is equivalent to the following problem by absorbing the $L_0$-norm constraint $\|\bm{\alpha}^i\|_0=\tilde{K}$ into the objective function:
\begin{equation}
\begin{split}
&\underset{\alpha_{i,\psi(k)}|_{k=1}^{\tilde{K}}}{\min}\sum_{k=1}^{\tilde{K}}h_{i,\psi(k)}\alpha_{i,\psi(k)}^r\\
&s.t.\alpha_{i,\psi(k)}\geq0,\sum_{k=1}^{\tilde{K}}\alpha_{i,\psi(k)}=1
\end{split}
\label{alpha_i_h_new}
\end{equation}

By using the Lagrangian Multiplier method, the Lagrangian function of problem (\ref{alpha_i_h_new}) is:
\begin{equation}
\mathcal{L}(\alpha_{i,\psi(k)},\lambda)=\sum_{k=1}^{\tilde{K}}h_{i,\psi(k)}\alpha_{i,\psi(k)}^r+\lambda(\sum_{k=1}^{\tilde{K}}\alpha_{i,\psi(k)}-1)
\label{lagrangian_alpha}
\end{equation}
where $\lambda$ is the  Lagrangian multiplier.
To solve the minimum of problem (\ref{lagrangian_alpha}), we take the derivatives of $\mathcal{L}$ with respect to $\alpha_{i,\psi(k)}$ and $\lambda$, respectively, and set them to zeros. We obtain the optimal solution of problem (\ref{alpha_i_h_new}):
\begin{equation}
\alpha_{i,\psi(k)}=\frac{h_{i,\psi(k)}^{\frac{1}{1-r}}}{\sum_{s=1}^{\tilde{K}}h_{i,\psi(s)}^{\frac{1}{1-r}}}
\label{solution_alpha}
\end{equation}
where $k\!=\!1,\cdots,\tilde{K}$.

Substituting the equation (\ref{solution_alpha}) into problem (\ref{alpha_i_h_new}), its optimal value arrives at:
\begin{equation}
\begin{split}
\sum_{k=1}^{\tilde{K}}h_{i,\psi(k)}\alpha_{i,\psi(k)}^r
&=(\sum_{k=1}^{\tilde{K}}h_{i,\psi(k)}^{\frac{1}{1-r}})^{1-r}   \\
&=\frac{1}{(\sum_{k=1}^{\tilde{K}}(\frac{1}{h_{i,\psi(k)}})^{\frac{1}{r-1}})^{r-1}}
\end{split}
\label{4-8}
\end{equation}
It is obvious that the minimum depends on $h_{i,\psi(k)}$, the smaller the better.

Therefore, the optimal solution of problem (\ref{alpha_i_h}) is:
\begin{equation}
\begin{split}
\alpha_{i,\psi(k)}=\left\{
             \begin{array}{lr}
             \frac{h_{i,\psi(k)}^{\frac{1}{1-r}}}{\sum_{s=1}^{\tilde{K}}h_{i,\psi(s)}^{\frac{1}{1-r}}} & k=1,\!\cdots\!,\tilde{K}\\
             0 & k=\tilde{K}+1,\!\cdots\!,c
             \end{array}
\right.
\label{final_solution_alpha}
\end{split}
\end{equation}

\paragraph{Step 2: Solving $\bm{B}$ while fixing $\bm{\alpha}$}
With the membership matrix $\bm{\alpha}$ fixed, the problem (\ref{subproblem_each_sample}) becomes:
\begin{equation}
\underset{\bm{B}}{\min}\sum_{i=1}^n\sum_{k=1}^c\|\bm{x}_i-\bm{b}_k\|_2\alpha_{ik}^r
\label{B}
\end{equation}
which can be solved by introducing a nonnegative auxiliary variable $s_{ik}$ and using the iterative re-weighted method. Thus, we rewrite the problem (\ref{B}) as:
\begin{equation}
\underset{\bm{B}}{\min}\sum_{i=1}^n\sum_{k=1}^cs_{ik}\|\bm{x}_i-\bm{b}_k\|_2^2\alpha_{ik}^r
\label{rewrite_B}\end{equation}
where
\begin{equation}
s_{ik}=\frac{1}{2\|\bm{x}_i-\bm{b}_k\|_2}
\label{s_ik}\end{equation}
The optimal solution of problem (\ref{rewrite_B}) can be reached by taking derivative and setting it to zero. That is:
\begin{align}
\bm{b}_k=\frac{\sum_{i=1}^n\bm{x}_is_{ik}\alpha_{ik}}{\sum_{i=1}^ns_{ik}\alpha_{ik}}
\label{solution_bk}
\end{align}
Assuming that $\bm{\alpha}^{(t)}$ and $\bm{B}^{(t)}$ are computed at the $t$-th iteration, we can update the nonnegative auxiliary variable $s_{ik}$ according to equation (\ref{s_ik}) by current $\bm{B}^{(t)}$.
Intuitively, the above optimization is summarized in Algorithm \ref{alg2}.
\begin{algorithm}[t]
\SetAlgoLined
        \caption{Solving the problem (\ref{subproblem_each_sample})}
        \KwIn{Data matrix $\bm{X}\in\mathbb{R}^{d\times\!n}$, the number of clusters $c$, parameters $r$ and $\tilde{K}$}
        \KwOut{Membership matrix $\bm{\alpha}$, centroid matrix $\bm{B}$}

        Initialize centroid matrix $\bm{B}$ \;
        \While{$\mathit{obj}(t\!-\!1)\!-\!\mathit{obj}(t)=\mathit{thresh}\leq10^{-7}$}
        {
            \For{each sample $i$, $1\leq i\leq n$}
            {Obtaining the membership values $\alpha_{ik}$ in the problem (\ref{alpha_i}) by using the equation (\ref{final_solution_alpha})}
            \For{each cluster $k$, $1\leq k\leq c$}
            {Calculating the centroid vector $\bm{b}_k$ and updating the auxiliary variable $s_{ik}$ via (\ref{solution_bk}) and (\ref{s_ik}).}
    }
\label{alg2}
\end{algorithm}

\section{Theoretical Analysis}\label{theoretical_analyses}

 In this section, we provide computational complexity analysis and convergence analysis of our proposed REFCMFS method.

\subsection{Computational Analysis}\label{computational_analysis}

Suppose we have $n$ samples in $c$ clusters and each sample has $d$ dimensions. For each iteration, the computational complexity of REFCMFS involves two steps. The first step is to compute the membership matrix $\bm{\alpha}$, which has computational complexity $O(ncd+nc^2+nc\tilde{K})$. The second step is to calculate the centroid matrix $B$, which needs to $O(dnc)$ operations. For several public datasets $\tilde{K}<c$, the computational complexity of REFCMFS for each iteration is $O(nc\cdot\max(d,c))$. In addition, the computational complexities of other typical methods are listed in Table \ref{computational_complexity}, where $O(M(c))$ denotes the computational complexity of Newton's method used in RSFKM for each iteration. It can be seen that the complexity of REFCMFS is linear to $n$ and more suitable for handling the big dataset compared to GMM-based and graph-based methods.
\begin{table}[t]
\renewcommand\tabcolsep{7pt}
\renewcommand\arraystretch{1.2}
\centering
\caption{The computational complexity of different methods.}
\begin{tabular}{llll}
\toprule
Methods & Complexity & Methods & Complexity \\
\midrule
K-Means  & $O(ncd)$ & GMM & $O(n^3cd)$ \\
K-Means++  & $O(ncd)$ & SC  & $O(n^3)$\\
K-Medoids & $O((n\!-\!c)^2cd)$ & RSFKM & $O(n(M(c)\!+\!dc))$\\
FCM & $O(nc^2d)$ & \textbf{REFCMFS} & $\bm{O(nc\cdot\max(d,c))}$\\
\bottomrule
\end{tabular}
\label{computational_complexity}
\end{table}

\subsection{Convergence Analysis}
To proof the convergence of the Algorithm \ref{alg2}, we need the \textbf{Lemma 1} proposed in \cite{nie2010efficient} to be used for the proof of \textbf{Theorem 1}. It can be listed as follows:
\begin{lemma}
For any nonzero vectors $\bm{u}_{t+1}$,$\bm{u}_t\in\mathbb{R}^d$, the following inequality holds:
\begin{equation}
\|\bm{u}_{t+1}\|_2-\frac{\|\bm{u}_{t+1}\|_2^2}{2\|\bm{u}_t\|_2}\leq\|\bm{u}_t\|_2-\frac{\|\bm{u}_t\|_2^2}{\|\bm{u}_t\|_2}
\end{equation}
where $\bm{u}_{t+1}$ and $\bm{u}_t$ denote the results at the $t+1$-th and $t$-th iterations, respectively.
\end{lemma}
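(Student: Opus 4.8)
The plan is to strip the statement down to a scalar inequality, since both sides involve $\bm{u}_{t+1}$ and $\bm{u}_t$ only through their Euclidean norms. I read the right-hand denominator as $2\|\bm{u}_t\|_2$ rather than the printed $\|\bm{u}_t\|_2$; with the latter the right-hand side collapses identically to $0$ and the claim fails, so this is evidently a transcription slip from the symmetric form given in \cite{nie2010efficient}. Writing $a=\|\bm{u}_{t+1}\|_2\ge 0$ and $b=\|\bm{u}_t\|_2>0$, where $b>0$ is guaranteed by the hypothesis that $\bm{u}_t$ is nonzero, the vectors disappear entirely and the statement to prove becomes the one-variable inequality $a-\frac{a^2}{2b}\le b-\frac{b^2}{2b}$.

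First I would simplify the right-hand side to $\frac{b}{2}$ and then clear denominators by multiplying through by $2b>0$, which turns the goal into $2ab-a^2\le b^2$. Moving everything to one side gives $0\le b^2-2ab+a^2=(a-b)^2$, which holds for all real $a,b$. Tracing the substitutions back, the chain $(a-b)^2\ge 0$ is exactly the desired inequality, and the argument is complete.

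The point worth stressing is that there is essentially no obstacle here once the norms are abstracted into scalars: the entire content reduces to the perfect-square inequality $(a-b)^2\ge 0$, and the only hypothesis actually used is $\bm{u}_t\ne\bm{0}$, which legitimizes the division by $b$. An equivalent and perhaps more illuminating route is to invoke concavity of $z\mapsto\sqrt{z}$, whose first-order (tangent-line) bound $\sqrt{z}\le\sqrt{z_0}+\frac{z-z_0}{2\sqrt{z_0}}$, evaluated at $z=\|\bm{u}_{t+1}\|_2^2$ and $z_0=\|\bm{u}_t\|_2^2$, rearranges directly into the lemma. This viewpoint also explains why this precise bound is the one needed downstream: it is the surrogate majorizer obtained by linearizing $\sqrt{\cdot}$ at the current iterate, which is exactly the mechanism behind the iterative re-weighted update $s_{ik}=\frac{1}{2\|\bm{x}_i-\bm{b}_k\|_2}$ in Step 2.
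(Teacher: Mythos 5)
Your proof is correct and is essentially the standard argument for this lemma: the paper itself does not prove it but cites it from \cite{nie2010efficient}, where the proof is exactly your reduction to $(a-b)^2\geq 0$ with $a=\|\bm{u}_{t+1}\|_2$ and $b=\|\bm{u}_t\|_2$. You were also right to flag and repair the typo in the right-hand denominator (it should be $2\|\bm{u}_t\|_2$, as is consistent with how the lemma is actually applied in inequality (\ref{B_inequality2})), and your remark connecting the bound to the linearization of $\sqrt{\cdot}$ underlying the re-weighting $s_{ik}=\frac{1}{2\|\bm{x}_i-\bm{b}_k\|_2}$ is exactly the intended role of the lemma in the convergence proof.
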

\begin{theorem}
The Algorithm \ref{alg2} monotonically decreases the objective of the problem (\ref{our_objective_function}) in each iteration and converges to the global optimum.
\end{theorem}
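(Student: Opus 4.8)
The plan is to prove the two assertions separately: first the \emph{monotone decrease}, which reduces to checking that neither alternating block increases the objective, and then \emph{convergence}, which follows by pairing monotonicity with a lower bound. Write $J(\bm{\alpha},\bm{B})=\sum_{i=1}^n\sum_{k=1}^c\|\bm{x}_i-\bm{b}_k\|_2\alpha_{ik}^r$ for the objective in \eqref{our_objective_function}, and let $(\bm{\alpha}^{(t)},\bm{B}^{(t)})$ be the iterate after the $t$-th pass of Algorithm \ref{alg2}. The target is the two-link chain $J(\bm{\alpha}^{(t+1)},\bm{B}^{(t)})\le J(\bm{\alpha}^{(t)},\bm{B}^{(t)})$ from the $\bm{\alpha}$-update and $J(\bm{\alpha}^{(t+1)},\bm{B}^{(t+1)})\le J(\bm{\alpha}^{(t+1)},\bm{B}^{(t)})$ from the $\bm{B}$-update.

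For the $\bm{\alpha}$-step I would argue that \eqref{final_solution_alpha} is the \emph{exact} minimizer of the per-sample problem \eqref{alpha_i_h} over its feasible set, so the step cannot increase $J$ by construction. The argument has two layers: for any fixed support of size $\tilde{K}$, the Lagrangian computation \eqref{lagrangian_alpha}--\eqref{solution_alpha} gives the unique optimal weights and the closed-form optimal value \eqref{4-8}; and since that value is monotone in each retained $h_{i,\psi(k)}$, retaining the $\tilde{K}$ smallest entries of $\bm{h}^i$ minimizes over all admissible supports. Thus the ranking function picks the globally best support and \eqref{final_solution_alpha} the globally best weights on it, achieving the exact per-row minimum; summing over $i$ handles the full row-separable problem.

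The $\bm{B}$-step is the substantive part and is where Lemma 1 enters. Here $\bm{B}^{(t+1)}$ minimizes the reweighted surrogate \eqref{rewrite_B} with frozen weights $s_{ik}^{(t)}=1/(2\|\bm{x}_i-\bm{b}_k^{(t)}\|_2)$ from \eqref{s_ik}, which yields $\sum_{i,k}s_{ik}^{(t)}\|\bm{x}_i-\bm{b}_k^{(t+1)}\|_2^2\alpha_{ik}^r\le\sum_{i,k}s_{ik}^{(t)}\|\bm{x}_i-\bm{b}_k^{(t)}\|_2^2\alpha_{ik}^r$. I would then invoke Lemma 1 with $\bm{u}_{t+1}=\bm{x}_i-\bm{b}_k^{(t+1)}$ and $\bm{u}_t=\bm{x}_i-\bm{b}_k^{(t)}$ (reading the right-hand denominator of Lemma 1 as $2\|\bm{u}_t\|_2$, so the right side equals $\|\bm{u}_t\|_2/2$), multiply each instance by $\alpha_{ik}^r\ge 0$, and sum over $i,k$. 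Adding the summed Lemma-1 inequality to the surrogate inequality cancels the quadratic term $\sum_{i,k}\alpha_{ik}^r\|\bm{x}_i-\bm{b}_k^{(t+1)}\|_2^2/(2\|\bm{x}_i-\bm{b}_k^{(t)}\|_2)$ and leaves $\sum_{i,k}\alpha_{ik}^r\|\bm{x}_i-\bm{b}_k^{(t+1)}\|_2\le\sum_{i,k}\alpha_{ik}^r\|\bm{x}_i-\bm{b}_k^{(t)}\|_2$, i.e. $J$ does not increase. This is the classical iteratively-reweighted argument for $L_{2,1}$-type losses; the only care needed is the standard caveat at points where some $\|\bm{x}_i-\bm{b}_k\|_2$ vanishes.

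Chaining the two steps gives $J(\bm{\alpha}^{(t+1)},\bm{B}^{(t+1)})\le J(\bm{\alpha}^{(t)},\bm{B}^{(t)})$, and since $J\ge 0$ is bounded below, the monotone objective sequence converges. The main obstacle is the final clause, \emph{converges to the global optimum}. Monotone decrease plus boundedness only forces the objective values to converge and, with a stationarity check on limit points, a stationary/local point of a problem that is \emph{nonconvex} in $(\bm{\alpha},\bm{B})$ jointly and whose feasible set is nonconvex because of the $L_0$ constraint. Block-wise exactness of each update does not upgrade this to global optimality. I would therefore either weaken the statement to convergence to a stationary point of \eqref{subproblem_each_sample}, or spell out the additional structural assumptions that would be needed for a genuine global guarantee; closing or honestly flagging this gap is, I expect, the crux of the argument.
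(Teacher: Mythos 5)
Your proposal is correct and follows essentially the same route as the paper: alternating minimization, with the $\bm{\alpha}$-step handled by the exact per-row closed form (your justification of the support selection via monotonicity of the optimal value \eqref{4-8} is actually more complete than the paper's, which only notes convexity of $\alpha\mapsto h\alpha^r$), and the $\bm{B}$-step handled by the iteratively reweighted surrogate combined with Lemma 1 exactly as in inequalities \eqref{B_inequality1}--\eqref{B_inequality3}. Your closing reservation is also well-founded: the paper's own proof establishes only monotone decrease plus a lower bound, so the clause ``converges to the global optimum'' is not actually proved there either, and your suggestion to weaken it to convergence of the objective values (or to a stationary point) is the honest reading.
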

\begin{proof}

We decompose the problem (\ref{our_objective_function}) into two subproblems and utilize an alternately iterative optimization method to solve them.

According to \cite{boyd2004convex}, it is known that $f(x)=x^a$ is convex on $\mathbb{R}_{++}$ when $a\geq1$ or $a\leq0$, where $\mathbb{R}_{++}$ denotes the set of positive real numbers. For updating $\bm{\alpha}$, with the centroid matrix $\bm{B}$ fixed, the objective function of problem (\ref{alpha_i_h_new}) is $f_k(\alpha_{i,\psi(k)})\!=\!h_{i,\psi(k)}\alpha_{i,\psi(k)}^r$, where $r\!>\!1$, $k\!=\!1,\cdots,\tilde{K}$ and $h_{i,\psi(k)}$ can be seen as a constant. Therefore, $f_k(\alpha_{i,\psi(k)})$ is convex on $\mathbb{R}_{++}$ when $r\!>\!1$ and then $\sum_{k=1}^{\tilde{K}}f_k(\alpha_{i,\psi(k)})$ is convex when $r\!>\!1$.

For updating $\bm{B}$, with the membership matrix $\bm{\alpha}$ fixed, we use the \textbf{Lemma 1} to analyze the lower bound. After $t$ iterations, there are $\bm{B}^{(t)}$ and $s_{ik}^{(t)}$. Supposed that the updated $\bm{B}^{(t+1)}$ and $s_{ik}^{(t+1)}$ are the optimal solutions of problem (\ref{rewrite_B}), according to the definition of $s_{ik}$ in equation (\ref{s_ik}), there is:
\begin{equation}
\sum_{i=1}^n\sum_{k=1}^c\frac{\|\bm{x}_i-\bm{b}_k^{(t+1)}\|_2^2}{2\|\bm{x}_i-\bm{b}_k^{(t)}\|_2}\alpha_{ik}^r
\!\leq\!
\sum_{i=1}^n\sum_{k=1}^c\frac{\|\bm{x}_i-\bm{b}_k^{(t)}\|_2^2}{2\|\bm{x}_i-\bm{b}_k^{(t)}\|_2}\alpha_{ik}^r
\label{B_inequality1}
\end{equation}
According to the \textbf{Lemma 1}, we can obtain:
\begin{equation}
\begin{split}
&\sum_{i=1}^n\sum_{k=1}^c\left(\|\bm{x}_i-\bm{b}_k^{(t+1)}\|_2-\frac{\|\bm{x}_i-\bm{b}_k^{(t+1)}\|_2^2}{2\|\bm{x}_i-\bm{b}_k^{(t)}\|_2}\right)\alpha_{ik}^r   \\
&\leq
\sum_{i=1}^n\sum_{k=1}^c\left(\|\bm{x}_i-\bm{b}_k^{(t)}\|_2-\frac{\|\bm{x}_i-\bm{b}_k^{(t)}\|_2^2}{2\|\bm{x}_i-\bm{b}_k^{(t)}\|_2}\right)\alpha_{ik}^r
\end{split}
\label{B_inequality2}
\end{equation}
Combining inequalities (\ref{B_inequality1}) and (\ref{B_inequality2}), we can obtain:
\begin{equation}
\sum_{i=1}^n\sum_{k=1}^c\|\bm{x}_i-\bm{b}_k^{(t+1)}\|_2\alpha_{ik}^r
\leq
\sum_{i=1}^n\sum_{k=1}^c\|\bm{x}_i-\bm{b}_k^{(t)}\|_2\alpha_{ik}^r
\label{B_inequality3}
\end{equation}
which means that the problem (\ref{B}) has a lower bound.
Thus, in each iteration, Algorithm \ref{alg2} can monotonically decrease the objective function values of problem (\ref{our_objective_function}) until the algorithm converges.
\end{proof}

\section{Experiments}\label{experiments}

In this section, extensive experiments on several public datasets are conducted to evaluate the effectiveness of our proposed REFCMFS method.

\subsection{Experimental Setting}

\subsubsection{Datasets}
Several public datasets are used in our experiments which are described as follows.

\noindent\textbf{ORL.} This dataset \cite{samaria1994parameterisation} consists of 40 different subjects, 10 images per subject and each image is resized to 32$\times$32 pixels. The images are taken against a dark homogeneous background with the subjects in an upright, frontal position.

\noindent\textbf{Yale.} The dataset \cite{belhumeur1997eigenfaces} contains 165 gray-scale images of 15 individuals. There are 11 images per subject, one per different facial expression or configuration, and each image is resized to 32$\times$32 pixels.

\noindent\textbf{COIL20.} The dataset \cite{nene1996columbia} is constructed by 1440 gray-scale images of 20 objects (72 images per object). The size of each image is 32x 32 pixels, with 256 grey levels per pixel. The objects are placed on a motorized turntable against a black background and their Images are taken at pose intervals of 5 degrees.

\noindent\textbf{USPS.} The dataset \cite{hull1994database} consists of 9298 gray-scale handwritten digit images and each image is 16$\times$16 gray-scale pixels. It is generated by an optical character recognition algorithm which is used to scan 5 digit ZIP Codes and converts them to digital digits.

\noindent\textbf{YaleB.} For this database \cite{georghiades2001few}, it has 38 individuals and around 64 near frontal images under different illuminations per individual. We simply use the cropped images and resize them to 32$\times$32 pixels.

\noindent\textbf{COIL100.} This dataset \cite{nayar1996columbia} consists of 7200 color images of 100 objects. Similar to COIL20 dataset, the objects are placed on a motorized turntable against a black background and their images are taken at pose intervals of 5 degrees corresponding to 72 images per object.

\subsubsection{Compared methods}
We make comparisons between REFCMFS and several recent methods which are listed as follows.
K-Means clustering (K-Means),  Fuzzy C-Means clustering (FCM) \cite{bezdek1980convergence}, Spectral Clustering (SC) \cite{von2007tutorial}, and Gaussian Mixed Model (GMM) \cite{bishop2006pattern} are the baselines in our experiments. K-Means++ \cite{arthur2007k} and K-Medoids \cite{park2009simple} are the variants of K-Means clustering, where K-Means++ uses a fast and simple sampling to seed the initial centers for K-Means and K-Medoids replaces the mean with the medoid to minimize the sum of dissimilarities between the center of a cluster and other cluster members. Landmark-based Spectral Clustering (LSC) \cite{Chen11LandmarkSpectral} selects a few representative data points as the landmarks and represents the remaining data points as the linear combinations of these landmarks, where the spectral embedding of the data can then be efficiently computed with the landmark based representation, which can be applied to cluster the large-scale datasets. Robust and Sparse Fuzzy K-Means Clustering (RSFKM) \cite{xu2016robust} improves the membership matrix with proper sparsity balanced by a regularization parameter. Besides, we make a comparison between REFCMFS and its simplified version sim-REFCMFS which replaces the $L_{2,1}$-norm loss of REFCMFS with the least square criteria loss.

\subsubsection{Evaluation Metrics}
In our experiments, we adopt clustering accuracy (ACC) and normalized mutual information (NMI) as evaluation metrics. For these two metrics, the higher value indicates better clustering quality. Each metric penalizes or favors different properties in the clustering, and hence we report results on these two measures to perform a comprehensive evaluation.
\begin{figure*}[t]
\begin{center}
\includegraphics[width=0.8\linewidth]{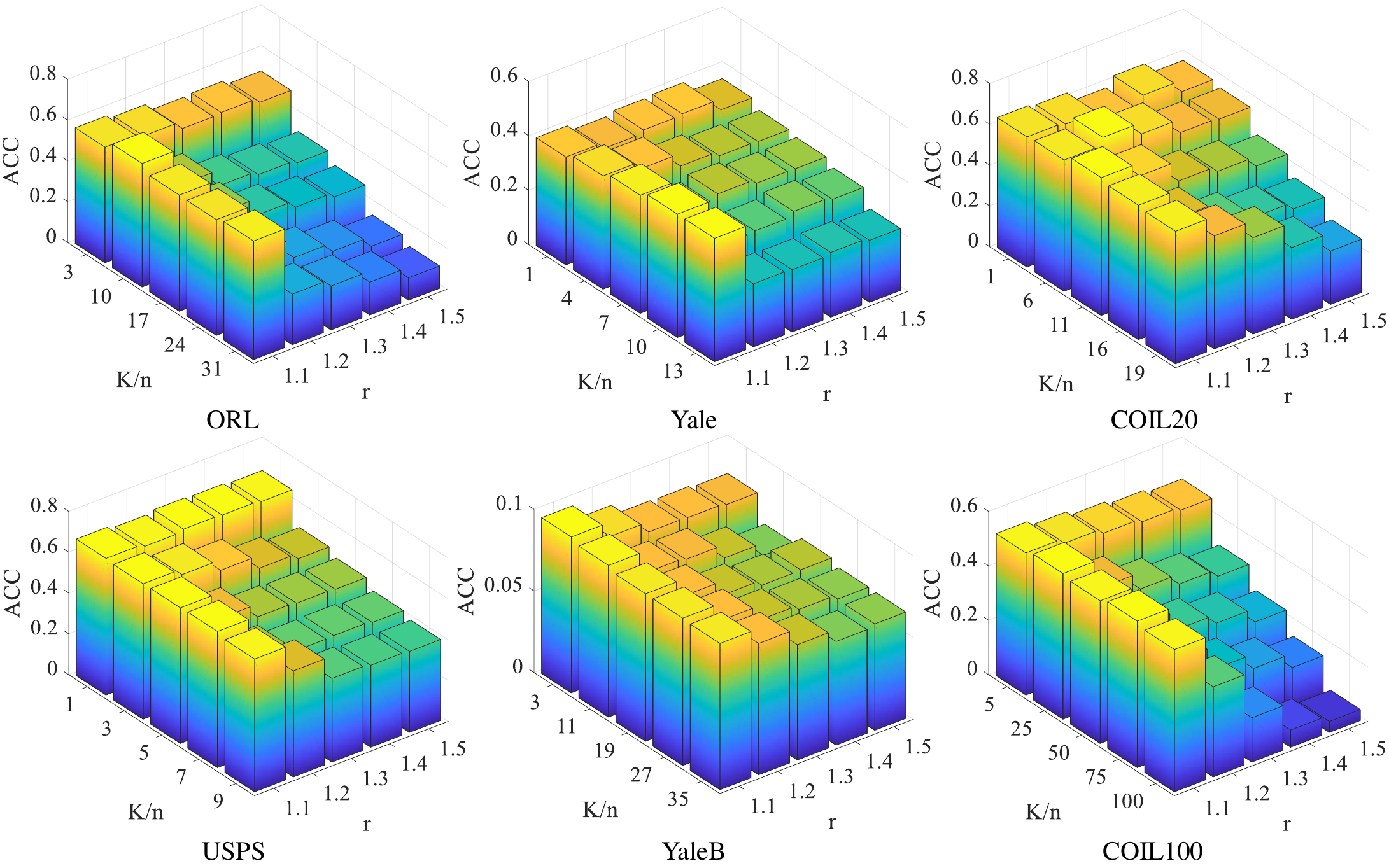}
\end{center}
\caption{Parameters $\tilde{K}=\frac{K}{n}$ and $r$ sensitivity analyses of REFCMFS on ORL, Yale, COIL20, USPS, YaleB, and COIL100 datasets according to Clustering Accuracy.}
\label{visualization-parameters-acc}
\end{figure*}
\begin{figure*}[t]
\begin{center}
\includegraphics[width=0.8\linewidth]{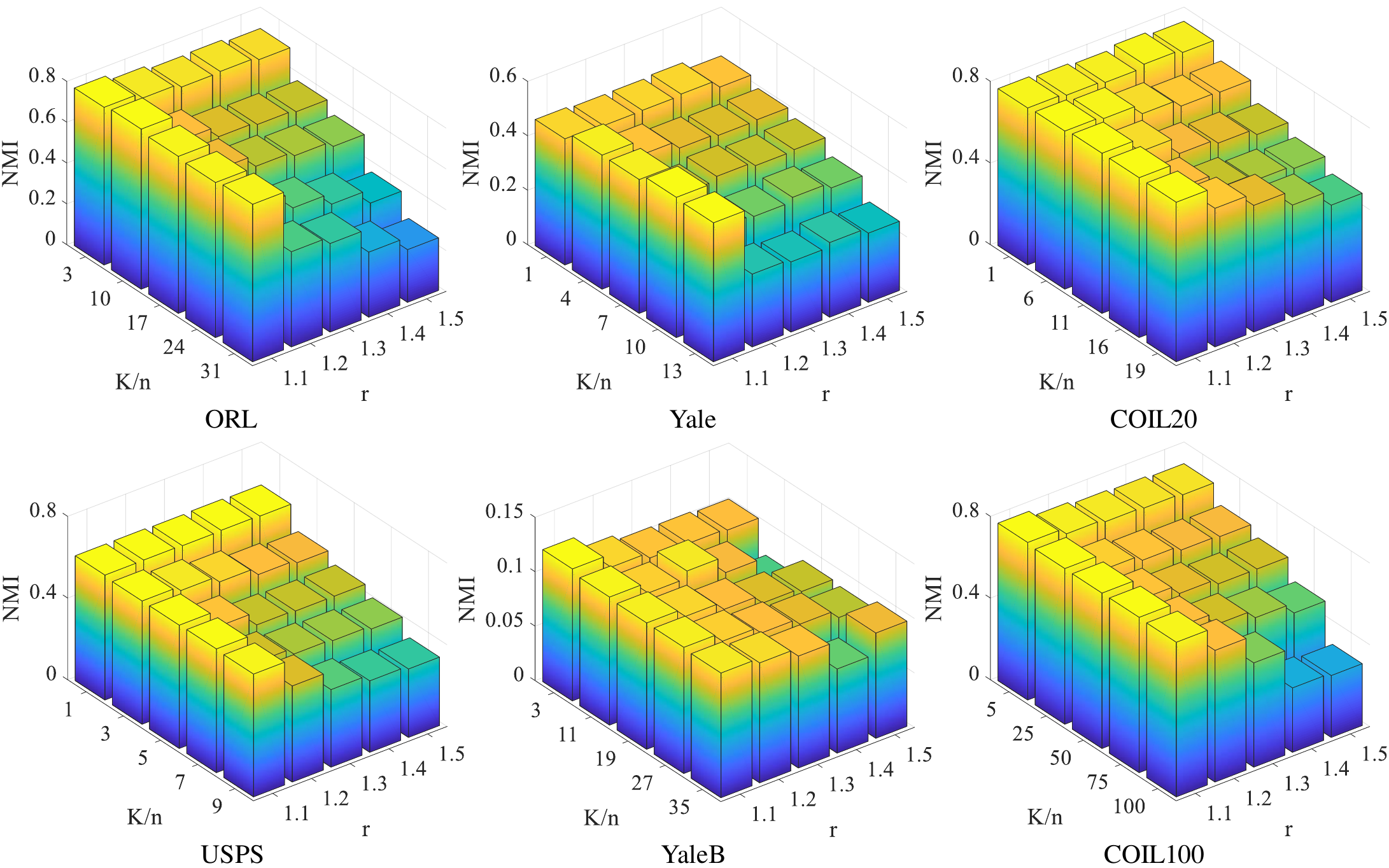}
\end{center}
\caption{Parameters $\tilde{K}=\frac{K}{n}$ and $r$ sensitivity analyses of REFCMFS on ORL, Yale, COIL20, USPS, YaleB, and COIL100 datasets according to Clustering NMI.}
\label{visualization-parameters-nmi}
\end{figure*}

\noindent\textbf{ACC.}
Let $r_i$ be the clustering result and $s_i$ be the ground truth label of $\bm{x}_i$. ACC is defined as:
\begin{equation}
\text{ACC}=\frac{\sum_{i=1}^n\delta(s_i,map(r_i))}{n}
\label{acc}
\end{equation}
Here $n$ is the total number of samples. $\delta(x,y)$ is the delta function that equals one if $x=y$ and equals zero otherwise. $map(r_i)$ is the best mapping function that utilizes the Kuhn-Munkres algorithm to permute clustering labels to match the ground truth labels.

\noindent\textbf{NMI.}
Supposing $C$ indicates the set of clusters obtained from the ground truth and $C'$ indicates the set of clusters obtained from our algorithm. Their mutual information metric $\text{MI}(C,C')$ is defined as:
\begin{equation}
\text{MI}(C,C')=\underset{c_i\in C,c'_j\in C'}{\sum}p(c_i,c'_j)\text{log}_2\frac{p(c_i,c'_j)}{p(c_i)p(c'_j)}
\label{mi}
\end{equation}
Here, $p(c_i)$ and $p(c'_j)$ are the probabilities that an arbitrary sample belongs to the clusters $c_i$ and $c'_j$, respectively. $p(c_i,c'_j)$ is the joint probability that the arbitrarily selected sample belongs to both the clusters $c_i$ and $c'_j$. Here, the following normalized mutual information (NMI) is adopted:
\begin{equation}
\text{NMI}(C,C')=\frac{\text{MI}(C,C')}{max(H(C),H(C'))}
\label{nmi}
\end{equation}
where $H(C)$ and $H(C')$ are the entropies of $C$ and $C'$, respectively. Note that $\text{NMI}(C,C')$ is ranged from 0 to 1. $\text{NMI}\!=\!1$ when the two sets of clusters are identical, and $\text{NMI}\!=\!0$ when they are independent.
\begin{table*}[t]
\renewcommand\tabcolsep{25pt}
\renewcommand\arraystretch{1}
\centering
\caption{Comparison results on ORL and Yale datasets in terms of ACC and NMI.}
\begin{tabular}{lcccc}
\toprule
\multirow{2}{*}{Methods} &
\multicolumn{2}{c}{ORL} &
\multicolumn{2}{c}{Yale}\\
 & ACC & NMI & ACC & NMI \\
\midrule
\multicolumn{1}{l}{K-Means} & 48.60$\pm$1.40 & 71.28$\pm$1.37 & 42.91$\pm$6.18 & 49.66$\pm$3.41 \\
\multicolumn{1}{l}{K-Means++}  & 50.70$\pm$3.80 & 73.57$\pm$2.17 & 37.45$\pm$4.97 & 45.59$\pm$3.33 \\
\multicolumn{1}{l}{K-Medoids} & 42.10$\pm$2.90 & 63.03$\pm$1.84 & 37.58$\pm$3.64 & 43.51$\pm$3.46 \\
\multicolumn{1}{l}{GMM} & 53.85$\pm$5.40 & 75.53$\pm$2.48 & 40.61$\pm$4.24 & 48.27$\pm$3.80 \\
\multicolumn{1}{l}{SC}  & 37.55$\pm$1.55 & 66.75$\pm$0.86 & 24.79$\pm$0.48 & 35.79$\pm$0.50 \\
\multicolumn{1}{l}{LSC}  & 53.70$\pm$5.45 & 75.08$\pm$2.40 & 42.18$\pm$5.21 & 48.50$\pm$3.53 \\
\multicolumn{1}{l}{FCM} & 19.30$\pm$2.30 & 44.50$\pm$2.26 & 24.36$\pm$3.52 & 30.36$\pm$3.18 \\
\multicolumn{1}{l}{RSFKM}  & 53.94$\pm$4.31 & 75.84$\pm$1.41 & 39.15$\pm$5.82 & 46.05$\pm$1.97 \\
\multicolumn{1}{l}{sim-REFCMFS}   & 57.85$\pm$4.35 & 77.41$\pm$1.77 & 46.18$\pm$2.54 & 53.55$\pm$1.68 \\
\multicolumn{1}{l}{\textbf{REFCMFS}}  & \textbf{60.50}$\pm$\textbf{2.95} & \textbf{78.41}$\pm$\textbf{2.06} & \textbf{47.88}$\pm$\textbf{2.43} & \textbf{54.16}$\pm$\textbf{1.57} \\
\bottomrule
\end{tabular}
\label{results0}
\end{table*}
\begin{table*}[t]
\renewcommand\tabcolsep{25pt}
\renewcommand\arraystretch{1}
\centering
\caption{Comparison results on COIL20 and USPS datasets in terms of ACC and NMI.}
\begin{tabular}{lcccc}
\toprule
\multirow{2}{*}{Methods} &
\multicolumn{2}{c}{COIL20} &
\multicolumn{2}{c}{USPS} \\
 & ACC & NMI & ACC & NMI \\
\midrule
\multicolumn{1}{l}{K-Means} & 52.99$\pm$6.53 & 73.41$\pm$1.92 & 64.30$\pm$3.08 & 61.03$\pm$0.62\\
\multicolumn{1}{l}{K-Means++}  & 57.74$\pm$5.60 & 75.79$\pm$2.52 & 64.05$\pm$2.35 & 60.79$\pm$0.54 \\
\multicolumn{1}{l}{K-Medoids} & 50.15$\pm$6.19 & 65.34$\pm$1.77 & 51.05$\pm$8.79 & 43.70$\pm$7.01 \\
\multicolumn{1}{l}{GMM} & 59.83$\pm$3.50 & 75.51$\pm$0.71 & 68.83$\pm$2.49 & 72.40$\pm$1.33 \\
\multicolumn{1}{l}{SC}  & 57.83$\pm$3.25 & 75.66$\pm$1.18 & 25.98$\pm$0.06 & 10.18$\pm$0.07 \\
\multicolumn{1}{l}{LSC}  & 59.76$\pm$4.21 & 73.43$\pm$3.29 & 62.42$\pm$4.23 & 58.39$\pm$2.16 \\
\multicolumn{1}{l}{FCM} & 23.85$\pm$4.62 & 41.31$\pm$3.88 & 37.79$\pm$2.45 & 29.71$\pm$2.68 \\
\multicolumn{1}{l}{RSFKM}  & 65.76$\pm$7.99 & 76.12$\pm$2.53 & 67.38$\pm$0.01 & 61.68$\pm$0.01 \\
\multicolumn{1}{l}{sim-REFCMFS}   & 68.56$\pm$5.13 & 76.36$\pm$2.39 & 67.56$\pm$6.94 & 61.37$\pm$1.62 \\
\multicolumn{1}{l}{\textbf{REFCMFS}}  & \textbf{69.51}$\pm$\textbf{3.40} & \textbf{77.60}$\pm$\textbf{1.66} & \textbf{70.02}$\pm$\textbf{8.58} & \textbf{66.79}$\pm$\textbf{2.93} \\
\bottomrule
\end{tabular}
\label{results1}
\end{table*}
\begin{table*}[t]
\renewcommand\tabcolsep{25pt}
\renewcommand\arraystretch{1}
\centering
\caption{Comparison results on USPS, YaleB and COIL100 datasets in terms of ACC and NMI.}
\begin{tabular}{lcccc}
\toprule
\multirow{2}{*}{Methods} &
\multicolumn{2}{c}{YaleB} &
\multicolumn{2}{c}{COIL100} \\
& ACC & NMI & ACC & NMI \\
\toprule
\multicolumn{1}{l}{K-Means}   & 9.36$\pm$0.00 & 12.34$\pm$0.00 & 48.21$\pm$2.62 & 75.69$\pm$0.67\\
\multicolumn{1}{l}{K-Means++}  & 9.55$\pm$0.76 & 13.04$\pm$1.29 & 46.26$\pm$0.68 & 75.58$\pm$0.27\\
\multicolumn{1}{l}{K-Medoids} & 6.68$\pm$0.37 & 8.25$\pm$0.20 & 31.59$\pm$0.68 & 63.10$\pm$0.50\\
\multicolumn{1}{l}{GMM} & 9.65$\pm$0.30 & 13.57$\pm$0.39 & 43.54$\pm$4.63 & 75.92$\pm$1.30\\
\multicolumn{1}{l}{SC}  & 7.71$\pm$0.21 & 10.03$\pm$0.39 & 8.91$\pm$0.19 & 26.72$\pm$0.13\\
\multicolumn{1}{l}{LSC}  & 9.56$\pm$0.82 & 12.46$\pm$1.32 & 48.05$\pm$1.84 & 75.71$\pm$0.91 \\
\multicolumn{1}{l}{FCM} & 7.49$\pm$0.65 & 9.91$\pm$1.28 & 10.43$\pm$1.56 & 42.16$\pm$2.83 \\
\multicolumn{1}{l}{RSFKM}  & 9.63$\pm$0.60 & 12.55$\pm$0.68 & 51.76$\pm$1.48 & 76.08$\pm$0.35 \\
\multicolumn{1}{l}{sim-REFCMFS}   & 9.88$\pm$1.30 & 12.79$\pm$0.73 & 52.60$\pm$1.46 & 76.45$\pm$0.32 \\
\multicolumn{1}{l}{\textbf{REFCMFS}}  & \textbf{10.04}$\pm$\textbf{0.47} & \textbf{13.61}$\pm$\textbf{0.61} & \textbf{53.15}$\pm$\textbf{1.84} & \textbf{77.82}$\pm$\textbf{0.74}\\
\bottomrule
\end{tabular}
\label{results2}
\end{table*}
\begin{table*}[ht]
\renewcommand\tabcolsep{18pt}
\renewcommand\arraystretch{0.9}
\centering
\caption{Time complexities of different methods on ORL, Yale, COIL20, USPS, YaleB, and COIL100 datasets, respectively.}
\begin{tabular}{lcccccc}
\toprule
\multirow{2}{*}{Methods} &
\multicolumn{6}{c}{Runtime (s)} \\
& ORL & Yale & COIL20 & USPS & YaleB & COIL100 \\
\midrule
K-Means  & 0.0917 & 0.0544 & 0.2501 & 0.3435 & 0.3245 & 1.6119\\
K-Means++  & 0.2262 & 0.0509 & 0.6452 & 1.3849 & 1.5643 & 12.7211\\
K-Medoids & 0.0362 & 0.0290 & 0.1386 & 2.2490 & 0.3049 & 2.3005\\
GMM & 6.7570 & 0.8921 & 178.5853 & 946.8664 & 298.7445 & 770.7133\\
SC & 1.6823 & 0.5310 & 14.2997 & 150.3779 & 39.5985 & 351.5918\\
FCM & 0.3669 & 0.1516 & 1.4901 & 0.9839 & 32.7830 & 18.4595\\
RSFKM & 0.5985 & 0.2768 & 2.6506 & 17.1141 & 6.9165 & 25.0760\\
\textbf{REFCMFS} & \textbf{0.3500} & \textbf{0.2395} & \textbf{1.6598} & \textbf{2.3491} & \textbf{6.3403} & \textbf{12.0973}\\
\bottomrule
\end{tabular}
\label{time_complexity}
\end{table*}

\subsubsection{Parameter Setup}
There are two parameters $\tilde{K}$ and $r$ in our proposed REFCMFS method.
The first one $\tilde{K}$ in problem (\ref{subproblem_each_sample}) is utilized to adjust the number of nonzero elements in the membership vector $\bm{\alpha}^i$. We search the optimal $\tilde{\kappa}$ in the range of $(1,c)$ with different steps corresponding to different datasets. The second one $r$ in the problem (\ref{subproblem_each_sample}) controls how fuzzy the cluster will be (the higher the fuzzier) and can be tuned by a grid-search strategy from $1$ to $1.5$ with step $0.1$.
We report different values of parameters $\tilde{K}$ and $r$ in Figures \ref{visualization-parameters-acc} and \ref{visualization-parameters-nmi} to intuitively describe their sensitivity analyses of REFCMFS on different datasets, respectively, and record the best clustering results with optimal parameters.

It can be seen that each parameter plays an important role on the performance. Specifically, we set parameter $r\!=\!1.1$ and parameter $\tilde{K}$ for different datasets as follows: ORL $(\tilde{K}\!=\!10)$, Yale $(\tilde{K}\!=\!9)$, COIL20 $(\tilde{K}\!=\!13)$, USPS $(\tilde{K}\!=\!5)$, YaleB $(\tilde{K}\!=\!3)$, and COIL100 $(\tilde{K}\!=\!75)$. Taking the YaleB dataset as an example, the 3D bars of ACC and NMI simultaneously achieve the highest values when $\tilde{K}\!=\!3$ and $r\!=\!1.1$.
\begin{figure*}[ht]
\begin{center}
\includegraphics[width=\linewidth]{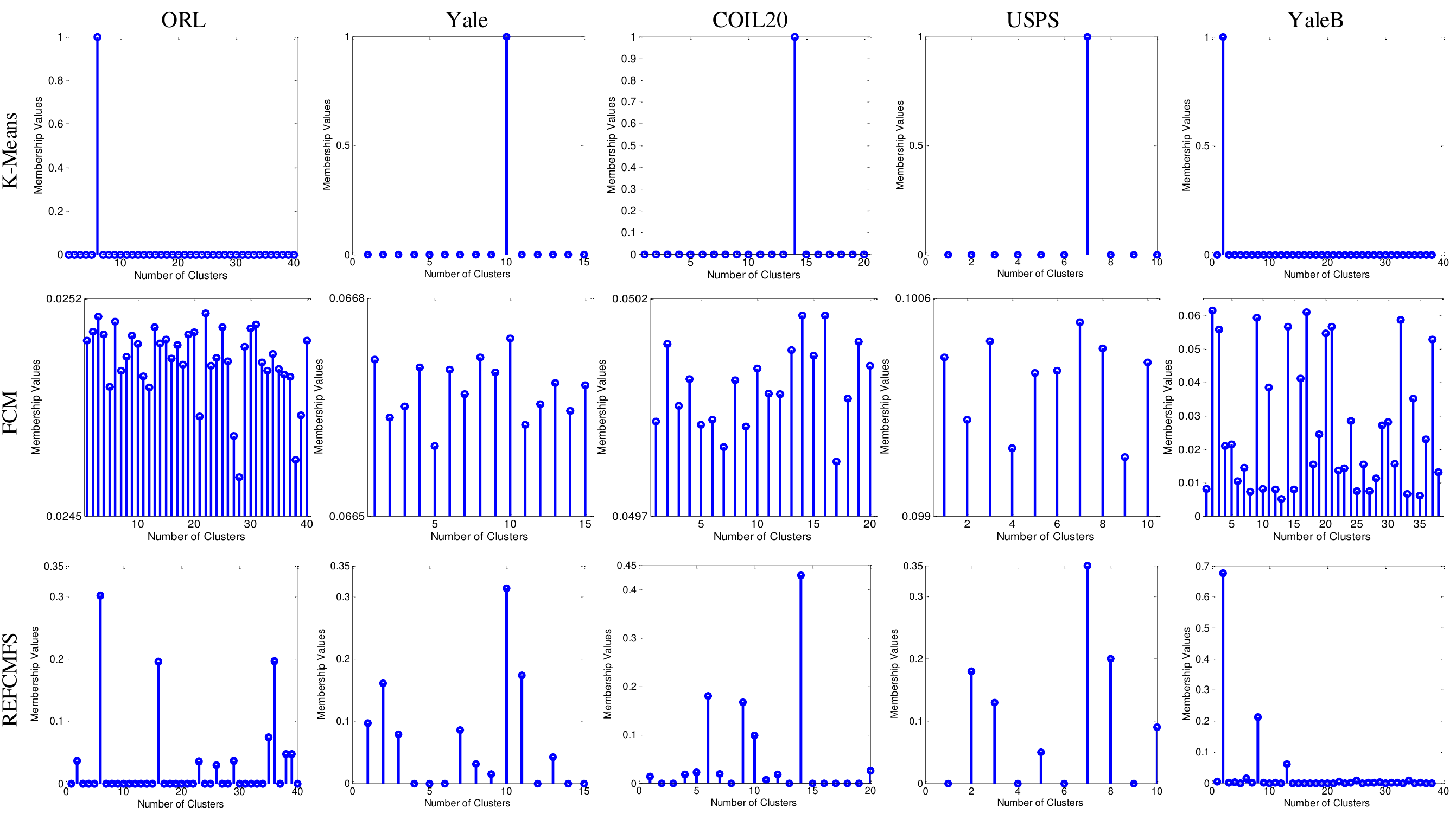}
\end{center}
\caption{The membership values of each sample for K-Means, FCM, and REFCMFS on ORL, Yale, COIL20, USPS, and YaleB datasets, respectively.}
\label{outA}
\end{figure*}

\subsection{Experimental Results}
In this section, we report the clustering performance comparisons of REFCMFS in Tables \ref{results0} $\sim$ \ref{results2} and have the following observations.

Compared to four baselines K-Means, FCM, SC, and GMM, our proposed REFCMFS method and its simple version sim-REFCMFS generally achieve better performance on all the datasets. For instance, on the Yale dataset, REFCMFS obtains 4.735\%, 23.66\%, 20.73\%, and 6.58\% average improvements (For simplicity, the average improvement here is defined as the improvement averaged over two clustering evaluation metrics ACC and NMI.), respectively, compared to four baselines. Similarly, sim-REFCMFS gains 3.58\%, 22.505\%, 19.575\%, and 5.425\% average improvements, respectively. This observation indicates that it is beneficial to combine the advantages of hard partition and soft partition, to introduce the $L_{2,1}$-norm robust loss, and to make the membership matrix with proper sparsity. This conclusion also can be demonstrated on other five datasets. Moreover, to intuitively present the flexible sparse membership values of REFCMFS with respect to those of the hard and soft partitions (i.e., K-Means and FCM), we show them in Figure \ref{outA}, which can be seen that flexible sparsity is more beneficial to clustering.

Besides, compared to K-Means++ and K-Medoids (two variants of K-Means), REFCMFS and sim-REFCMFS obtain better results on all the datastes. Specifically, for the COIL20 dataset, REFCMFS achieves 6.79\% and 15.81\% average improvements and sim-REFCMFS gets 5.695\% and 14.715\% average improvements. It is obvious that although K-Means++ and K-Medoids improve K-Means in initialization they are not good at handling outliers because of the poor robustness of the least squares criterion. This conclusion also can be verified on other five datasets. Concretely, compared with K-Means++, REFCMFS achieves 7.32\%, 9.5\%, 5.985\%, 0.53\%, and 4.565\% average improvements on ORL, Yale, USPS, YaleB, and COIL100 datasets, respectively, and sim-REFCMFS obtains 5.495\%, 8.345\%, 2.045\%, 0.07\% and 3.605\% average improvements. Compared to K-Medoids, REFCMFS achieves 16.89\%, 10.475\%, 21.03\%, 4.36\% and 18.14\% average improvements on ORL, Yale, USPS, YaleB, and COIL100 datasets, respectively, and sim-REFCMFS obtains 15.065\%, 9.32\%, 17.09\%, 3.87\% and 13.35\% average improvements.

In addition, REFCMFS outperforms two recent works LSC and RSFKM on all the datasets. Considered that LSC needs to select a few representative data points as the landmarks and represents the remaining data points as the linear combinations of these landmarks, how to select the representative information directly affects on this method.
Therefore, compared to LSC, REFCMFS achieves 5.065\%, 5.68\%, 6.96\%, 8\%, 0.815\%, and 3.605\% average improvements on ORL, Yale, COIL20, USPS, YaleB, and COIL100 datasets, respectively.
RSFKM introduces a penalized regularization on the membership matrix and controls the sparsity of membership matrix by the regularization parameter, which differs in that REFCMFS efficiently adjusts the sparsity of membership matrix through its $L_0$-norm constraint. Compared to RSFKM, REFCMFS achieves 4.565\%, 8.42\%, 2.615\%, 3.875\%, 0.735\%, and 1.365\% average improvements on ORL, Yale, COIL20, USPS, YaleB, and COIL100 datasets, respectively.

Furthermore, sim-REFCMFS is the simple version of REFCMFS, which achieves the second best performance on almost all datasets. Both sim-REFCMFS and REFCMFS prove that introducing the $L_0$-norm constraint with flexible sparsity imposed on the membership matrix can result in better performance than other comparison methods. Whereas, the loss function of sim-REFCMFS is based on the least square criteria, not a robust loss based on the $L_{2,1}$-norm, which may be sensitive to the outliers. Concretely, compared with sim-REFCMFS, REFCMFS achieves 1.825\%, 1.155\%, 1.095\%, 3.94\%, 0.49\%, 0.96\% average improvements on ORL, Yale, COIL20, USPS, YaleB, and COIL100 datasets, respectively.

Finally, Figure \ref{lossvalues} shows the convergence curves and proves above convergence analysis of REFCMFS.
What is more, combining the above computational complexity analysis in subsection \ref{computational_analysis}, we calculate the time complexities of different methods on all the datasets and report them in Table \ref{time_complexity}. It is obvious REFCMFS is faster than RSFKM, GMM, and SC methods on all the datasets.
\begin{figure*}[ht]
\begin{center}
\includegraphics[width=\linewidth]{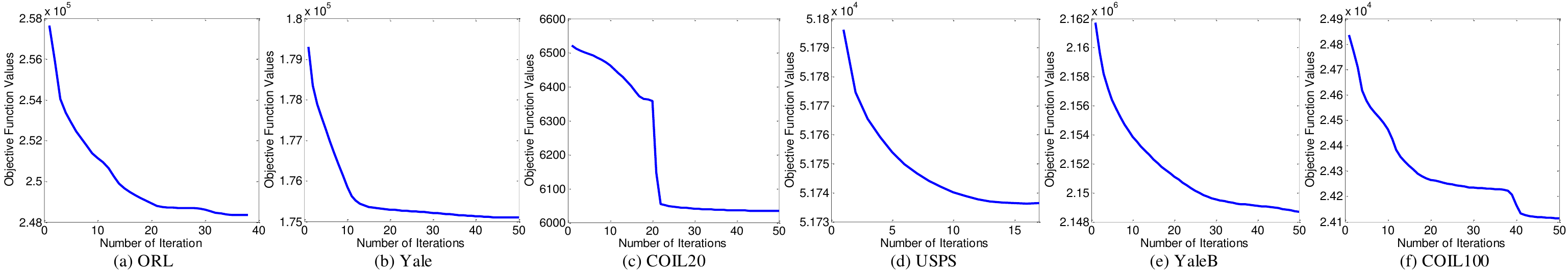}
\end{center}
\caption{The convergence curves of our proposed REFCMFS  method on ORL, Yale, COIL20, USPS, YaleB, and COIL100 datasets, respectively.}
\label{lossvalues}
\end{figure*}

\section{Conclusion}

In this paper, we have proposed a novel clustering algorithm, named REFCMFS, which develops a $L_{2,1}$-norm robust loss for the data-driven item and imposes a $L_0$-norm constraint on the membership matrix to make the model more robust and sparse flexibly. This not only avoids the incorrect or invalid clustering partitions from outliers but also greatly reduces the computational complexity. Concretely, REFCMFS designs a new way to simplify and solve the $L_0$-norm constraint directly without any approximate transformation by absorbing $\|\cdot\|_0$ into the objective function through a ranking function. This make REFCMFS can be solved by a tractable and skillful optimization method and guarantees the optimality and convergence. Theoretical analyses and extensive experiments on several public datasets demonstrate the effectiveness and rationality of our proposed method.

\bibliographystyle{IEEEtran}
\bibliography{jinglin-modified-version}

\end{document}